\documentclass[letterpaper, 10 pt, conference]{ieeeconf}
\usepackage{amsmath,amsfonts}
\usepackage{array}
\usepackage[caption=false,font=normalsize,labelfont=sf,textfont=sf]{subfig}
\usepackage{textcomp}
\usepackage{stfloats}
\usepackage{url}
\usepackage{verbatim}
\usepackage{graphicx}
\hyphenation{op-tical net-works semi-conduc-tor IEEE-Xplore}
\def\BibTeX{{\rm B\kern-.05em{\sc i\kern-.025em b}\kern-.08em
    T\kern-.1667em\lower.7ex\hbox{E}\kern-.125emX}}
\usepackage{balance}
\usepackage{bm}
\usepackage{cite}

\usepackage{booktabs}
\usepackage{multirow}
\usepackage{makecell}

\usepackage[T1]{fontenc}

\newtheorem{thm}{Theorem}

\newtheorem{remark}{Remark}
\newtheorem{definition}{Definition}

\usepackage{varwidth}
\usepackage{algorithm,algpseudocode}

\IEEEoverridecommandlockouts                              
\overrideIEEEmargins

\title{\LARGE \bf
Dynamic Log-Gaussian Process Control Barrier Function for Safe Robotic Navigation in Dynamic Environments
}

\author{Xin Yin, Chenyang Liang, Yanning Guo and Jie Mei, \IEEEmembership{Member, IEEE}
\thanks{Xin Yin and Chenyang Liang contributed equally to this work.}%
\thanks{Xin Yin, Chenyang Liang and Jie Mei are with the School of Intelligence Science and Engineering, Harbin Institute of Technology, Shenzhen, Guangdong 518055, China.}%
\thanks{Yanning Guo is with the Department of Control Science and Engineering, Harbin Institute of Technology, Heilongjiang 150001, China.}%
\thanks{This work was supported in part by the Shenzhen Fundamental Research Program under Grant JCYJ20241202124010014, in part by the Guangdong Basic and Applied Basic Research Foundation under Grant 2023B1515120018 and Grant 2024B1515040008, and in part by the National Natural Science Foundation of China under Grant 62525304 and Grant U23B2036. (Corresponding author: Jie Mei, jmei@hit.edu.cn.)}
}

\begin{document}

\maketitle
\thispagestyle{empty}
\pagestyle{empty}

\newlength\mylen
\newcommand\myinput[1]{%
  \settowidth\mylen{\KwIn{}}%
  \setlength\hangindent{\mylen}%
  \hspace*{\mylen}#1\\}

  \let\oldnl\nl
  \newcommand{\nonl}{\renewcommand{\nl}{\let\nl\oldnl}}

\begin{abstract}
  Control Barrier Functions (CBFs) have emerged as efficient tools to address the safe navigation problem for robot applications.
  However, synthesizing informative and obstacle motion-aware CBFs online using real-time sensor data remains challenging, particularly in unknown and dynamic scenarios. 
  Motived by this challenge, this paper aims to propose a novel Gaussian Process-based formulation of CBF, termed the Dynamic Log Gaussian Process Control Barrier Function (DLGP-CBF),
  to enable real-time construction of CBF which are both spatially informative and responsive to obstacle motion.  
  Firstly, the DLGP-CBF leverages a logarithmic transformation of GP regression to generate smooth and informative barrier values and gradients, even in sparse-data regions. 
  Secondly, by explicitly modeling the DLGP-CBF as a function of obstacle positions, the derived safety constraint integrates predicted obstacle velocities, allowing the controller to proactively respond to dynamic obstacles' motion.  
  Simulation results demonstrate significant improvements in obstacle avoidance performance, including increased safety margins, smoother trajectories, and enhanced responsiveness compared to baseline methods.  
\end{abstract}

\section{Introduction}

Ensuring obstacle avoidance in unknown and dynamic environments containing both static and dynamic obstacles with unknown positions and velocities is essential for real-world robotic applications.  
Recently, Control Barrier Functions (CBFs) have emerged as an effective and powerful framework for enforcing safety constraints in robotic navigation tasks due to their simplicity and computational efficiency \cite{ames2019control}.  
A CBF defines a safe set as the super-zero-level set of a scalar function, whose non-negativity guarantees that the system state remains within the safe region.  
By integrating the safety constraint derived from the CBF into optimization-based controllers, such as Quadratic Programming (QP) \cite{wang2017safety}, Model Predictive Control (MPC) \cite{jian2023dynamic}, 
robotic systems can achieve collision-free navigation while pursuing task objectives.
While the CBF-based safety constraint can be directly incorporated into these optimization problems once a valid CBF is available,  
the primary challenge lies in synthesizing CBFs that are both informative and reliable.

Typically, synthesizing CBFs requires accurate geometric information of obstacles, such as their explicit locations and shapes \cite{wang2017safety,autenrieb2023safe,lie2023formation}.
However, such prior knowledge is usually unavailable in unknown scenarios. 
Consequently, real-time online synthesis of CBFs directly from onboard sensor data has become a necessity in practical tasks.

Existing online sensor-based CBF synthesis methods can be broadly categorized into two main classes: explicit geometric-based methods and implicit data-driven methods. 
Explicit geometric-based methods reconstruct obstacle shapes from sensor data using simplified geometric primitives. 
For instance, studies \cite{jian2023dynamic,zhang2025robust} utilize Minimum Bounding Ellipses (MBEs) to approximate LiDAR-observed obstacles, subsequently constructing multiple CBF constraints for each obstacle.
However, such geometric approximations typically yield overly conservative safe sets, as obstacles are enclosed by coarse bounding shapes.
Moreover, multiple safety constraints lead to increased computational complexity and poor scalability in environments with dense obstacles \cite{zhang2024online}.

In contrast, implicit data-driven methods \cite{dawson2022learning,srinivasan2020synthesis,9981177,9888130,keyumarsi2023lidar,10160805} synthesize CBFs by implicitly learning obstacle geometry from sensor data through supervised machine learning techniques.
These methods usually model the environment with a single unified barrier function, thus significantly reducing the computational burden associated with multiple constraints. 
Authors in \cite{srinivasan2020synthesis} employ a support vector machine (SVM) classifier to categorize sampled states as safe or unsafe, from which the CBF is synthesized. 
Similarly, the authors in \cite{9981177} utilize a deep neural network to directly update the CBF from instantaneous LiDAR measurements.
However, these approaches \cite{srinivasan2020synthesis,9981177} typically demand extensive labeled data and significant computational resources for online training, restricting their practical real-time applicability.

Recently, Gaussian Process (GP)-based methods \cite{9888130,keyumarsi2023lidar} have demonstrated significant potential for online CBF synthesis, owing to their ability to perform regression without requiring large-scale labeled datasets, as well as their low training cost and inference efficiency.  
Unlike other parameterized learning methods such as SVM \cite{srinivasan2020synthesis} and neural network \cite{9981177}, GP regression is a non-parametric model that relies on the correlation between training points and query inputs to produce predictions \cite{lyu}, making it particularly suitable for scenarios with sparse and noisy sensor data.
In \cite{keyumarsi2023lidar}, the constant-mean GP regression is employed to synthesize the CBF as a function of the robot's position.
The training dataset is constructed using the positions of the obstacle boundary points extracted from instantaneous LiDAR data.
However, existing GP-based approaches \cite{9888130,keyumarsi2023lidar} exhibit two critical limitations when applied to safe robot navigation tasks.
First, these methods are inherently reactive, as they treat the obstacle configuration as static within each control cycle.
The GP-based CBF is updated solely on instantaneous LiDAR observations without accounting for obstacle motion or future positions. 
This limitation makes the resulting CBFs less effective for anticipating potential collisions, especially when obstacles are moving rapidly or unpredictably.
Second, the standard constant-mean GP regression adopted in these approaches \cite{9888130,keyumarsi2023lidar} tends to return to its prior mean value in regions far from observed data.
This behavior leads to the saturation of predicted CBF values and vanishing gradients when the robot is not in close proximity to any obstacle. 
As a result, the CBF tends to provide insufficient safety information in such regions, potentially causing delayed avoidance responses.

Motivated by the above discussions, this paper proposes a novel GP-based CBF formulation, termed the \emph{Dynamic Log Gaussian Process Control Barrier Function (DLGP-CBF)}, to enhance safe robot navigation in unknown and dynamic environments.
The main contributions are summarized as follows.
\begin{itemize}
  \item We introduce a logarithmic GP-based CBF formulation that ensures informative and non-vanishing barrier values and gradients, even in regions distant from observed obstacles.
  \item We explicitly incorporate predicted obstacle velocities into the CBF formulation by modeling the CBF as a function of obstacle positions, enabling proactive and motion-aware collision avoidance.
  \item We validate the proposed method through simulation experiments in dynamic environments, demonstrating superior safety margins, responsiveness, and trajectory smoothness compared to existing baselines.
\end{itemize}

The remainder of this paper is organized as follows. 
Section \ref{sec:preliminaries} introduces preliminaries and the problem description. 
Section \ref{sec:methodology} details the proposed DLGP-CBF synthesis and the associated control framework. 
Simulation results validating the effectiveness of the proposed approach are presented in Section \ref{sec:simulation}.

\section{Preliminaries and Problem Description} \label{sec:preliminaries}

\subsection{Preliminaries}
\subsubsection{Dynamic Control Barrier Function}
Consider a general control-affine system with dynamics
\begin{equation}\label{eq:control_affine}
  \begin{aligned}
    \dot{x} &= f(x) + g(x)u,
  \end{aligned}
\end{equation}
where $x \in \mathcal{X} \subset \mathbb{R}^n$ is the state, $u \in \mathbb{R}^m$ is the control input, $f: \mathbb{R}^n \to \mathbb{R}^n$ and $g: \mathbb{R}^n \to \mathbb{R}^{n \times m}$ are locally Lipschitz continuous functions.
For safety-critical control, we define a dynamic safe set $\mathcal{C}(t)$ as the zero super-level set of a continuously differentiable function $h: \mathcal{X} \times \mathbb{R}^+ \to \mathbb{R}$, i.e.,
\begin{equation*}
  \begin{aligned}
    \mathcal{C}(t) &= \{x \in \mathcal{X} \mid h(x,t) \geq 0\}.
  \end{aligned}
\end{equation*}
The system \eqref{eq:control_affine} is defined to be \textit{safe} if the state $x(t)$ remains within the safe set $\mathcal{C}(t)$ for all time $t \geq 0$.
We introduce the dynamic control barrier function (D-CBF) $h(x,t)$ as follows.
\begin{definition}[\cite{jian2023dynamic}]
A continuously differentiable function $h: \mathcal{X} \times \mathbb{R}^+ \to \mathbb{R}$ is a D-CBF for the system \eqref{eq:control_affine} if there exists an extended class $\mathcal{K}_\infty $ function $\alpha(\cdot)$ such that the following inequality holds for all $(x,t) \in \mathcal{X} \times \mathbb{R}^+$:
  \begin{equation}\label{eq:dcf}
    \begin{aligned}
      \sup_{u \in \mathbb{R}^m} [ \mathcal{L}_fh(x,t) +\mathcal{L}_g h(x,t)u  + \frac{\partial h(x,t)}{\partial t} + \alpha(h(x,t))] \geq 0,
        \end{aligned}
  \end{equation}
  where $\mathcal{L}_fh$ and $\mathcal{L}_gh$ are the Lie derivatives of $h(x,t)$ with respect to $f(x)$ and $g(x)$, respectively.
\end{definition}

Let ${K}_{cbf}(x,t)$ denote the set of control inputs that satisfy the D-CBF condition \eqref{eq:dcf}, i.e.,
\begin{equation*}
  \begin{aligned}
    {K}_{cbf}(x,t) = \{u \in \mathbb{R}^m \mid \mathcal{L}_fh(x,t) +\mathcal{L}_g h(x,t)u   \\
    + \frac{\partial h(x,t)}{\partial t} + \alpha(h(x,t)) \geq 0\}&,
  \end{aligned}
\end{equation*}
then the controller $u(x,t) \in {K}_{cbf}(x,t)$ guarantees the safety of the system \eqref{eq:control_affine}.

\subsubsection{Gaussian Process Regression} \label{sec:gp}

Gaussian Process (GP) regression \cite{seeger2004gaussian} is a non-parametric Bayesian method widely employed for nonlinear regression tasks. Unlike parametric approaches, GP regression models the unknown target function as a probability distribution over functions, enabling predictions based on correlations among observed data points, rather than requiring parameter updates using large-scale datasets. This property makes GP regression particularly suitable for sparse and noisy datasets.

Mathematically, a GP can be fully characterized by its mean and kernel functions. For simplicity, we adopt a zero-mean Gaussian process expressed as
\begin{equation*}
f \sim \mathcal{GP}\left(0, k({x},{x}')\right),
\end{equation*}
where the kernel function $k({x},{x}'):\mathcal{X} \times \mathcal{X} \to \mathbb{R} $ quantifies the similarity or correlation between two arbitrary input points, and $\mathcal{X} \in \mathbb{R}^r$ represents the input space.

Consider a training dataset consisting of $N$ input-output pairs, denoted as $\{{x}_i,y_i\}_{i=1}^{N}$, where each input $x_i \in \mathcal{X}$ is associated with a label $y_i \in \mathbb{R}$.
The input and label datasets are defined as ${X} = [{x}_1, \ldots, {x}_N]^\mathsf{T}$ and ${Y} = [y_1, \ldots, y_N]^\mathsf{T}$, respectively.
For an arbitrary query point ${x}^* \in \mathcal{X}$, the predicted output ${y}^*$ follows a Gaussian distribution given by
\begin{equation} \label{eq:1a}
    {y}^* \sim \mathcal{N}(\mu({x}^*), \sigma^2({y}^*)),
\end{equation}
where the predictive mean and variance are computed by
\begin{subequations}
  \begin{align}
    \mu({x}^*) &= {k}^\mathsf{T}({x}^*)K^{-1}{Y}, \label{eq:1b}\\
    \sigma^2({y}^*) &= k({x}^*,{x}^*) - {k}^\mathsf{T}({x}^*)K^{-1}{k}({x}^*), \label{eq:1c}
  \end{align}
\end{subequations}
where $K \in \mathbb{R}^{N \times N}$ is the covariance matrix with elements $[K]_{ij}=k({x}_i,{x}_j),\, i,j \in \{1,\ldots,N\}$, 
and ${k}({x}^*)=[k({x}^*,{x}_1),\ldots,k({x}^*,{x}_N)]^\mathsf{T} \in \mathbb{R}^{N}$ represents the covariance vector between the query point ${x}^*$ and the input dataset.

\subsection{Problem Description}
We consider a wheeled differential-drive robot with state $x = [p_x,p_y,\theta]^\mathsf{T} \in \mathcal{X} \subseteq \mathbb{R}^2 \times [-\pi, \pi ) $ and control input $u = [v,\omega]^\mathsf{T} \in \mathbb{R}^2$. 
The robot dynamics is described by
\begin{equation}\label{eq:dynamic} 
  \begin{aligned}
    \begin{bmatrix}
      \dot{p}_x \\
      \dot{p}_y \\
      \dot{\theta}
    \end{bmatrix}
    &=
    \begin{bmatrix}
      \cos(\theta) & 0 \\
      \sin(\theta) & 0 \\
      0 & 1
    \end{bmatrix}
    \begin{bmatrix}
      v \\
      \omega
    \end{bmatrix}
  \end{aligned}.
\end{equation}
Let $\phi: \mathcal{X}  \to \mathbb{R}^2$ be the map from the robot state $x$ to its position $\phi(x) = [p_x,p_y]^\mathsf{T}$.

\textbf{Problem Statement:}
Consider a mobile robot with dynamics \eqref{eq:dynamic} navigating in an unknown dynamic environment toward a predefined goal. The robot is equipped with a LiDAR sensor that provides measurements of obstacles. 
The objective is to:
\begin{enumerate}
  \item Synthesize a dynamic control barrier function $h(x,t)$ online using LiDAR data, ensuring real-time adaptation to environmental changes.
    \item Design a controller $u(t)$ that guarantees safe navigation toward the goal while avoiding obstacles.
  \end{enumerate}

\section{Methodology} \label{sec:methodology}

\subsection{Local Perception}

The local perception module estimates obstacle positions and velocities in real-time using LiDAR sensor data, which are essential for synthesizing our proposed DLGP-CBF. 
Specifically, we construct a local obstacle grid map \(\mathcal{M}_{o,t} \in \{0,1\}^{W \times H}\), with width \(W\) and height \(H\), centered on the robot's current position in the global coordinate frame.
Each grid cell of the obstacle grid map \(\mathcal{M}_{o,t}(a,b)\) encodes the occupancy status of the corresponding spatial location and is updated at each time step $t$ as 
\begin{equation*} \small
  \mathcal{M}_{o,t}(a,b) = 
  \begin{cases} 1, & \text{if a LiDAR ray endpoint at time step } t  \\
    &\text{ falls within cell } (a,b),
    \\ 0, 
    & \text{otherwise}. \end{cases} \end{equation*}

Since our DLGP-CBF explicitly incorporates obstacle velocity information, a local velocity grid map \(\mathcal{M}_{v,t} \in \mathbb{R}^{W \times H \times 2}\) is also constructed. 
Each grid cell \(\mathcal{M}_{v,t}(a,b) \in \mathbb{R}^2\) stores the predicted velocity vector of the obstacle at the corresponding spatial location.
To obtain velocity estimates, occupied grid cells in the obstacle grid map \(\mathcal{M}_{o,t}\) are transformed into spatial points in the global frame, forming an array \(\mathcal{D}_t = [d_{1,x,t}, d_{1,y,t}, \ldots, d_{N,x,t}, d_{N,y,t}]^\mathsf{T} \in \mathbb{R}^{2N}\), where \(N\) denotes the number of occupied cells.

We use the DBSCAN algorithm \cite{ester1996density} to segment the points $\mathcal{D}_t$ into $N_{obs}$ clusters, denoted as $\mathcal{D}_t=\{\mathcal{D}_{1,t},\mathcal{D}_{2,t},\ldots,\mathcal{D}_{N_{obs},t}\}$. 
Following the clustering process, the minimum bounding ellipse (MBE) algorithm \cite{welzl2005smallest} is used to fit each cluster with a MBE. 
The set of MBEs is denoted as $\mathcal{E}_t = \{\mathcal{E}_{1,t},\mathcal{E}_{2,t},\ldots,\mathcal{E}_{N_{obs},t}\}$, where each ellipse $\mathcal{E}_{i,t} = [c_{i,x,t},c_{i,y,t},a_{i,t},b_{i,t},\theta_{i,t}]^\mathsf{T}$ is parameterized by its center coordinates $[c_{i,x,t}, c_{i,y,t}]^\mathsf{T}$, semi-major axis $a_{i,t}$, semi-minor axis $b_{i,t}$, and orientation $\theta_{i,t}$.

To establish temporal correspondences between ellipses at consecutive time steps, we compare the MBEs from the previous frame with those at the current frame. 
For this purpose, we define an affinity matrix $\mathcal{A} \in \mathbb{R}^{N_{obs,t-1} \times N_{obs,t}}$, where each element $\mathcal{A}_{i,j}$ represents the distance between the $i$-th ellipse at $t-1$ and the $j$-th ellipse at $t$. 
The Kuhn-Munkres algorithm \cite{kuhn1955hungarian} is applied to compute the optimal association between ellipses across frames. Matches with a distance exceeding a predefined threshold $d_{max}$ are classified as new obstacles, and newly detected ellipses are assigned fresh labels.
For MBEs successfully matched across time steps, we employ Kalman filtering \cite{welch1995introduction} to estimate their velocities. 
The state variable of each tracked MBE is defined as $\hat{\mathcal{E}}_{l} = [c_{l,x},c_{l,y},\dot{c}_{l,x},\dot{c}_{l,y},\ddot{c}_{l,x},\ddot{c}_{l,y},a_{l},b_{l},\theta_{l}]^\mathsf{T}$, where $\dot{c}_{l,x}$ and $\dot{c}_{l,y}$ denote the velocity, and $\ddot{c}_{l,x}$ and $\ddot{c}_{l,y}$ denote the acceleration of the ellipse.
Through iterative prediction and measurement updates, we obtain the estimated velocity $v_{l}=[\dot{c}_{l,x},\dot{c}_{l,y}]^\mathsf{T}$ of each ellipse $\hat{\mathcal{E}}_{l}$. 
The computed velocity $v_{l}$ is then assigned to the corresponding point cluster and subsequently mapped to the appropriate grid cell in the velocity grid map $\mathcal{M}_{v,t}$, 
thereby explicitly encoding obstacle velocity information for subsequent DLGP-CBF synthesis.

\subsection{Dynamic Log-GP CBF Synthesis}

We employ GP regression to construct the Dynamic Log Gaussian Process Control Barrier Function (DLGP-CBF), which quantifies the robot's safety level based on its state and obstacle information from LiDAR measurements. 
Specifically, the DLGP-CBF, denoted as $h(x,\mathcal{D})$, is defined as
\begin{equation} \label{eq:dlgp_cbf}
  h(x,\mathcal{D})= -c_{s}\log(\mu(\phi (x),\mathcal{D}))-d_{\text{shift}},
\end{equation}
where $\mu(\phi (x),\mathcal{D})$ denotes the predictive mean from the GP regression evaluated at the query position $\phi (x)$ with respect to the training dataset $\mathcal{D}$, 
and the user-defined parameters \(c_{s} \in \mathbb{R}^+\) and \(d_{\text{shift}} \in \mathbb{R}^+\) represent the scaling factor and the safety margin adjustment, respectively.

To form the training dataset for GP regression, positional information of the detected obstacles is extracted from the local obstacle grid map $\mathcal{M}_o$.
Specifically, occupied grid cells in $\mathcal{M}_o$ are transformed into spatial points within the global coordinate frame, forming the input dataset as
\begin{equation} \label{eq:input_dataset}
  \mathcal{D} = \left[d_{1}^\mathsf{T}, \ldots, d_{N}^\mathsf{T}\right]^\mathsf{T} \in \mathbb{R}^{2N},
\end{equation}
where $d_{i}=\left[d_{i,x}, d_{i,y}\right]^\mathsf{T}$ denotes the global coordinates of the $i$-th occupied grid cell, and $N$ represents the total number of occupied cells.
Similarly, the velocity information for obstacles is extracted from the velocity grid map $\mathcal{M}_v$ into the velocity array:
\begin{equation} \label{eq:velocity_dataset}
  \mathcal{V} = \left[v_{1}^\mathsf{T}, \ldots, v_{N}^\mathsf{T}\right]^\mathsf{T} \in \mathbb{R}^{2N},
\end{equation}
where $v_{i}=\left[v_{i,x}, v_{i,y}\right]^\mathsf{T}$ denotes the velocity vector of the $i$-th occupied grid cell.
Each input data point in $\mathcal{D}$ is assigned the label $y_i=1$, forming the corresponding label dataset $Y = [y_1, \ldots, y_N]^\mathsf{T} \in \mathbb{R}^N$.

We use the zero-mean GP regression model with the squared exponential (SE) kernel function $k \left(p,p'\right)= {  e^{ -\frac{||p-p'||^2}{2l^2}}} $ to construct the DLGP-CBF, where $l$ denote the length scale of the SE kernel.
The predictive mean $\mu(\phi (x),\mathcal{D})$ is computed by
\begin{equation}
  \mu(\phi (x),\mathcal{D}) = {\tilde{k}}^\mathsf{T} K^{-1}{Y},
\end{equation}
where ${\tilde{k}}=[k(\phi (x),{d}_1),\ldots,k(\phi (x),{d}_N)]^\mathsf{T} \in \mathbb{R}^{N}$ denotes the covariance vector between the query point $\phi (x)$ and the input dataset $\mathcal{D}$,
and $K \in \mathbb{R}^{N \times N}$ is the covariance matrix with elements $[K]_{ij}=k({d}_i,{d}_j),\, i,j \in \{1,\ldots,N\}$.
The SE kernel ensures smoothness and continuity in the resulting DLGP-CBF, and the covariance matrix \(K\) is positive definite due to the properties of the kernel.

The following theorem characterizes the properties of the DLGP-CBF $h(x,\mathcal{D})$.
\begin{thm} 
  Consider the input dataset $\mathcal{D}$ defined in \eqref{eq:input_dataset} consisting of obstacle positions, and the corresponding label dataset $Y=\mathbf{1}_N$, where $\mathbf{1}_N$ is the vector of ones.
  The DLGP-CBF $h(x,\mathcal{D})$ defined in \eqref{eq:dlgp_cbf} maps the robot's state to the range $[-d_{\text{shift}},+\infty )$ based on its distance from obstacles.
  Let $d_{\text{min}} = \min_{d \in \mathcal{D}} ||\phi(x)-d||$ denote the minimum distance between the robot and the obstacle positions within the input dataset $\mathcal{D}$.
  Then, the DLGP-CBF $h(x,\mathcal{D})$ satisfies: 
  \begin{itemize} 
    \item $h(x,\mathcal{D}) = -d_{\text{shift}}$, when $d_{\text{min}}=0$;
    \item $h(x,\mathcal{D}) \to +\infty$, as $d_{\text{min}} \to +\infty$;
    \item $h(x,\mathcal{D})$ is a strictly decreasing function of $d_{\text{min}}$.
    \end{itemize} 
\end{thm}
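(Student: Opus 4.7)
The plan is to read all three bullets off the definition $h(x,\mathcal{D}) = -c_s \log \mu(\phi(x),\mathcal{D}) - d_{\text{shift}}$ combined with the closed form $\mu(\phi(x),\mathcal{D}) = \tilde{k}^\mathsf{T} K^{-1}\mathbf{1}_N$ (since $Y = \mathbf{1}_N$), so that every claim reduces to tracking the predictive mean $\mu$ as the query $\phi(x)$ moves relative to $\mathcal{D}$.

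For the boundary case $d_{\min}=0$, I would observe that $\phi(x)=d_j$ for some index $j$. Under the SE kernel, $k(d_j,d_j)=1$ and $k(d_j,d_i)=[K]_{ji}$, so the covariance vector $\tilde{k}$ is precisely the $j$-th column of $K$; consequently $\tilde{k}^\mathsf{T} K^{-1} = e_j^\mathsf{T}$, giving $\mu = e_j^\mathsf{T}\mathbf{1}_N = 1$ and hence $h = -c_s\log 1 - d_{\text{shift}} = -d_{\text{shift}}$. The only prerequisite is invertibility of $K$, which is already guaranteed by strict positive definiteness of the SE kernel as noted in the paper. For the limit $d_{\min}\to+\infty$, every $\|\phi(x)-d_i\|\ge d_{\min}$ diverges, so each entry of $\tilde{k} = [\exp(-\|\phi(x)-d_i\|^2/(2l^2))]_i$ decays to zero exponentially while the fixed vector $K^{-1}\mathbf{1}_N$ does not depend on $\phi(x)$. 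Therefore $\mu\to 0^+$ and $-c_s\log\mu\to+\infty$, so $h\to+\infty$; combined with the first bullet and continuity of $h$ in $\phi(x)$, this already pins the image of $h$ to $[-d_{\text{shift}},+\infty)$.

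The main obstacle is the monotonicity claim, because $\mu$ depends on the full configuration of $\phi(x)$ with respect to $\mathcal{D}$ and not only on the scalar $d_{\min}$. I read the bullet as strict monotonicity along a natural parametrization in which $\phi(x)$ moves along the outward ray from its nearest training input $d_j$: along such a ray every $\|\phi(x)-d_i\|$ is non-decreasing in the arc-length $d_{\min}$, and the $j$-th coordinate of $\tilde{k}$ is strictly decreasing in $d_{\min}$. My plan is to show that this strict decrease propagates to $\mu = \tilde{k}^\mathsf{T} K^{-1}\mathbf{1}_N$, and then compose with the strictly decreasing scalar map $t \mapsto -c_s\log t - d_{\text{shift}}$ to transfer strict monotonicity to $h$ with the expected sign (the bullet is most naturally read as ``$h$ is a strictly monotone function of $d_{\min}$'', increasing as $d_{\min}$ grows, so that it agrees with the first two bullets). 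The delicate step is controlling the entries of $K^{-1}\mathbf{1}_N$, which need not be uniformly positive; I would handle this by isolating the dominant nearest-neighbour contribution to $\tilde{k}^\mathsf{T} K^{-1}\mathbf{1}_N$ and bounding the remaining terms via the exponential decay of the SE kernel, verifying that the derivative of $\mu$ with respect to $d_{\min}$ keeps a strict sign along the chosen ray.
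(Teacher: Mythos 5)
Your treatment of the first two bullets coincides exactly with the paper's proof: for $d_{\min}=0$ the covariance vector $\tilde{k}$ is the $j$-th column of $K$, so $\tilde{k}^\mathsf{T}K^{-1}=e_j^\mathsf{T}$, $\mu=e_j^\mathsf{T}\mathbf{1}_N=1$, and $h=-d_{\text{shift}}$; for $d_{\min}\to+\infty$ the entries of $\tilde{k}$ vanish while $K^{-1}\mathbf{1}_N$ is fixed, so $\mu\to 0^{+}$ and $h\to+\infty$. You also correctly resolve the sign of the third bullet: the theorem statement says ``strictly decreasing,'' but the paper's own proof (and consistency with the first two bullets) shows the intended claim is that $h$ is strictly \emph{increasing} in $d_{\min}$.

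On the monotonicity bullet you have in fact located a gap in the paper's argument rather than merely reproducing it. The paper argues that each entry of $\tilde{k}$ is non-increasing in $d_{\min}$ and that $\mu=\tilde{k}^\mathsf{T}K^{-1}\mathbf{1}_N$ is a ``positive linear combination of the entries of $\tilde{k}$,'' hence decreasing. As you observe, the weight vector $K^{-1}\mathbf{1}_N$ need not be entrywise positive for the SE kernel, so entrywise decrease of $\tilde{k}$ does not by itself force $\mu$ to decrease; the paper offers no justification for this step. Your proposed repair --- parametrize along the outward ray from the nearest datum, isolate the dominant nearest-neighbour contribution, and bound the remainder via the exponential decay of the kernel --- is the right instinct, but it is left as a sketch, and as described it would only control the sign of the derivative of $\mu$ where the nearest-neighbour term genuinely dominates (e.g., for large $d_{\min}$ or well-separated data), not uniformly over all configurations. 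Note also that $\mu$, and hence $h$, is not a function of the scalar $d_{\min}$ alone, so the bullet can only be meant in the restricted sense you adopt (monotonicity along a chosen path, or under additional assumptions on $\mathcal{D}$). In short: bullets one and two are proved, identically to the paper; bullet three is fully established neither by your sketch nor by the paper's own proof, and your write-up at least makes the missing step explicit.
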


\begin{proof}
  When $d_{\text{min}}=0$, i.e., $\phi(x)=d_j$ for some $j \in \{1,\ldots,N\}$, the predictive mean $\mu(\phi (x),\mathcal{D})$ can be expressed as
  \begin{equation*} \small
    \mu(\phi (x),\mathcal{D}) = [K]_{:,j}^\mathsf{T}K^{-1}{\mathbf{1}_N } = e_j^\mathsf{T}{\mathbf{1}_N } = 1,
  \end{equation*}
  where $[K]_{:,j}$ denotes the $j$-th column of matrix $K$ and $e_j$ denotes the $j$-th standard basis vector.
  Thus, \(h(x,\mathcal{D}) = -c_s \log(1) - d_{\text{shift}} = -d_{\text{shift}}\).
  As \(d_{\text{min}} \to +\infty\), the covariance vector \(\tilde{k}(\phi(x))\) approaches to the zero vector, i.e., \(\lim_{d_{\text{min}} \to +\infty} \tilde{k}(\phi(x)) = \mathbf{0}_N\), leading to \(\mu(\phi(x),\mathcal{D}) \to 0\), and hence \(h(x,\mathcal{D}) \to +\infty\).

  Additionally, the SE kernel satisfies $\frac{\partial k}{\partial ||p-p'||} < 0$, which implies that as \(d_{\text{min}}\) increases, each entry in \(\tilde{k}(\phi(x))\) decreases or remains the same.
  Since the predictive mean \(\mu(\phi(x),\mathcal{D})\) is a positive linear combination of the entries in \(\tilde{k}(\phi(x))\), it is strictly decreasing with respect to \(d_{\text{min}}\).
  After applying the logarithmic transformation, the DLGP-CBF is strictly increasing with respect to \(d_{\text{min}}\), proving the stated properties.
  \end{proof}

\begin{remark}
  The parameter $d_{\text{shift}}$ in the DLGP-CBF formulation \eqref{eq:dlgp_cbf} is specially designed to adjust the size of the safe set around obstacles.
  A large $d_{\text{shift}}$ value results in a larger zero-level set, thereby inducing a more conservative obstacle avoidance behavior. 
  Conversely, choosing a smaller $d_{\text{shift}}$ value may allow the robot to approach obstacles more closely, leading to a more aggressive navigation strategy.
  \end{remark}

  \begin{remark} 
    The proposed DLGP-CBF, defined in \eqref{eq:dlgp_cbf}, introduces two distinct innovations compared to conventional GP-based methods. 
    First, through the logarithmic transformation, the DLGP-CBF ensures that the barrier function values and gradients remain informative and non-vanishing at locations distant from the training data (see Fig.~\ref{cbf}), thus significantly enhancing the continuity and reliability of barrier information. 
    Second, the DLGP-CBF explicitly depends on obstacle positional data encoded within the training dataset, enabling the model to dynamically reflect obstacle positions and motions. These properties collectively improve the robot's capability to perform accurate, timely, and efficient obstacle avoidance behavior, consequently enhancing safety in dynamic environments.

  \end{remark}

\begin{figure}[!t]
  \centering
  \includegraphics[width=3.4in]{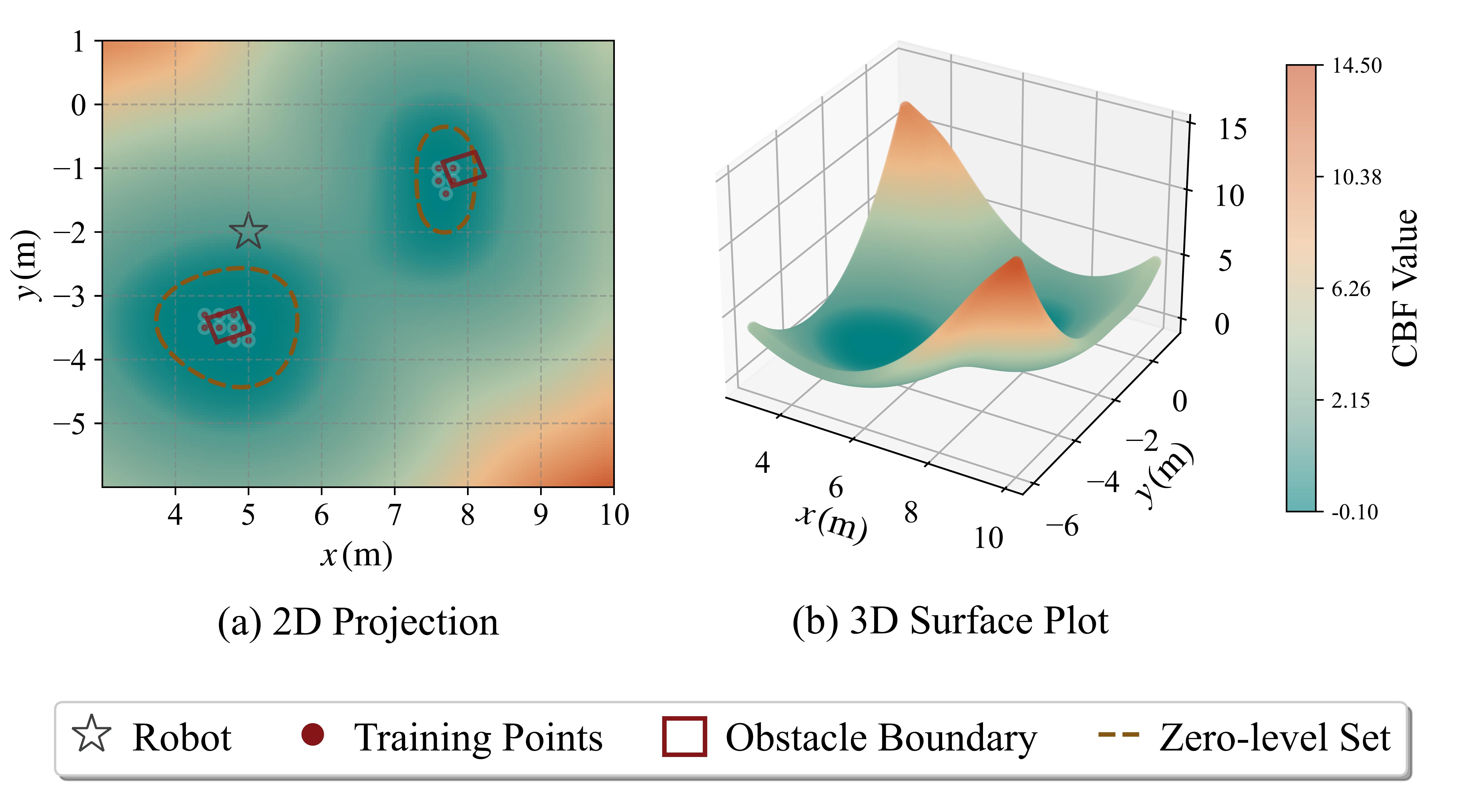}
  \caption{
    Visualization of the DLGP-CBF function with $c_s = 1$, $d_{\text{shift}} = 0.1$, and the SE kernel length scale $l = 0.9$.
    }
    \vspace{-0.5cm}
  \label{cbf}
  \end{figure}

\subsection{Safe Control via DLGP-CBF}

At each control step $t$, the latest updated obstacle grid map $\mathcal{M}_{o,t}$ and the velocity grid map $\mathcal{M}_{v,t}$ are utilized to construct the input dataset $\mathcal{D}_t$ and the velocity dataset $\mathcal{V}_t$ as defined in \eqref{eq:input_dataset} and \eqref{eq:velocity_dataset}, respectively.
The DLGP-CBF is then updated using these newly constructed datasets. 
For notational simplicity, we omit the time dependence of \( \mathcal{D}_t \) and \( \mathcal{V}_t \) in the subsequent discussion.

Given the dynamics \eqref{eq:dynamic}, we derive the safe control input by solving 
the following quadratic programming (QP) problem:
\begin{equation} \label{eq:qp}
  \begin{aligned}
    \min_{u \in \mathbb{R}^2} & \quad   ||u-u_{\text{nom}}(t)||^2, \\
    \text{s.t.} & \quad \mathcal{L}_fh(x,\mathcal{D}) + \mathcal{L}_g h(x,\mathcal{D})u \\
    & \quad  + \frac{\partial h(x,\mathcal{D})}{\partial t} + \alpha(h(x,\mathcal{D})) \geq 0, \\
  \end{aligned}
\end{equation}
where $u_{\text{nom}}(t)$ is the nominal control input that may not inherently guarantee safety.

To construct the CBF constraint explicitly, we need to compute both the spatial gradient $\frac{\partial h(x,\mathcal{D})}{\partial x}$, reflecting the influence of the robot's state on the barrier function, and the time derivative $\frac{\partial h(x,\mathcal{D})}{\partial t}$, capturing the effect of moving obstacles on the barrier function.
The partial derivative of $h(x,\mathcal{D})$  with respect to the robot state $x$ is calculated as
\begin{equation} \small
  \begin{aligned}
  & \frac{\partial h(x,\mathcal{D})}{\partial x} 
  =   -c_{s}\frac{\left[
    Y^\mathsf{T} K^{-1} \frac{\partial \tilde{k}}{\partial \phi(x)} ,
   0
   \right]^\mathsf{T}}{\mu(\phi(x),\mathcal{D})} ,
\end{aligned}
  \end{equation}
where $\frac{\partial \tilde{k}}{\partial \phi(x) }\in \mathbb{R}^{N \times 2}$ is the derivative of the covariance vector with respect to the robot's position $\phi(x)$.
The derivative $\frac{\partial \tilde{k}}{\partial \phi(x) }$ is given by
  \begin{equation}
     \frac{\partial \tilde{k}}{\partial \phi(x)}  =
    \begin{bmatrix}
      -\frac{1}{l^2} k_{\text{se}}(\phi(x), d_1) (\phi(x) - d_1)^\mathsf{T} \\
    \vdots \\
    -\frac{1}{l^2} k_{\text{se}}(\phi(x), d_N) (\phi(x) - d_N)^\mathsf{T}
    \end{bmatrix},
    \end{equation}
where $d_i$ is the global coordinates of the $i$-th training point in the input dataset $\mathcal{D}$.

Additionally, as $h(x,\mathcal{D})$ explicitly depends on obstacle positions encoded within the input dataset $\mathcal{D}$, 
changes in obstacle positions directly influence the barrier function through the time derivative, calculated by
\begin{equation} \small
  \begin{aligned}
   \frac{\partial h(x,\mathcal{D})}{\partial t} 
  =&   \sum_{r=1}^{N}   \frac{\partial h(x,d_1,\ldots,d_N)}{\partial d_r} \frac{\partial d_r}{\partial t}\\
  =& \frac{c_{s}Y^\mathsf{T} K^{-1} \otimes \tilde{k}^\mathsf{T} K^{-1} }{\mu(\phi(x),\mathcal{D})} \sum_{r=1}^{N}  \frac{\partial K}{\partial d_r} v_r
    \\
  &- \frac {c_{s}Y^\mathsf{T} K^{-1}}{\mu(\phi(x),\mathcal{D})} \sum_{r=1}^{N}  \frac{\partial \tilde{k} }{\partial d_r} v_r ,
\end{aligned}
  \end{equation}
where $\otimes$ denotes the Kronecker product, $v_r$ denotes the velocity of the $r$-th training point $d_r$,
and the term $\frac{\partial K}{\partial d_r} \in \mathbb{R}^{N^2 \times 2}$ and $\frac{\partial \tilde{k} }{\partial d_r} \in \mathbb{R}^{N \times 2}$ denote the partial derivatives of the covariance matrix $K$ and the covariance vector $\tilde{k}$ with respect to the $r$-th training point $d_r$, respectively.
Specifically, the derivative $\frac{\partial K}{\partial d_r}$ is given by
    \begin{equation} \small
    \begin{aligned}
    &\frac{\partial K}{\partial d_r}  = \frac{\partial \mathrm{vec}(K)}{\partial d_r}
    =\left[ \frac{\partial K_{ij}}{\partial d_r} \right] \\
    =&\left[
      -\frac{1}{l^2}K_{ij}\left(\   \delta_{ri}(d_i-d_j)^\mathsf{T} + \delta_{rj}(d_j-d_i)^\mathsf{T} \right) 
    \right]_{i,j=1}^N,
    \end{aligned}
    \end{equation}
where $\mathrm{vec}(K)$ denotes the vectorization operation which stacks the columns of the matrix $K$ into a single column vector.
The derivative $\frac{\partial\tilde{k}}{\partial d_r} $ is expressed as
\begin{equation}
  \frac{\partial\tilde{k}}{\partial d_r}  =
  \begin{bmatrix}
    \frac{1}{l^2} k_{\text{se}}(\phi(x), d_1) (\phi(x) - d_1)^\mathsf{T} \delta_{r1}  \\
  \vdots \\
  \frac{1}{l^2} k_{\text{se}}(\phi(x), d_N) (\phi(x) - d_N)^\mathsf{T} \delta_{rN}
  \end{bmatrix}.
  \end{equation}

Finally, once the terms $\frac{\partial h(x,\mathcal{D})}{\partial x}$ and $\frac{\partial h(x,\mathcal{D})}{\partial t}$ are computed, the safe control input $u(t)$ can be obtained by solving the QP problem \eqref{eq:qp}, thereby ensuring the robot's safety in dynamic environments.

\section{Simulations} \label{sec:simulation}

\subsection{Experiment Setup}

To evaluate the performance of the proposed method, simulations were conducted using the TurtleBot3 robot in the Gazebo simulator. The robot operates in an unknown dynamic environment and is equipped with a Velodyne VLP-16 LiDAR sensor to perceive its surroundings.
The robot starts at the initial position \([-8, 3]^\mathsf{T}\) and is tasked with reaching a predefined goal position \(p_{\text{goal}} = [10, -2]^\mathsf{T}\) while avoiding both static and dynamic obstacles.
The nominal controller $u_{\text{nom}}(t)$ is designed as a go-to-goal controller that steers the robot towards a predefined goal position, and is defined as $u_{\text{nom}}(t)=u_{\text{max}}\frac{p_{\text{goal}}- \phi(x)}{\|p_{\text{goal}}- \phi(x)\|} $, where $u_{\text{max}} \in \mathbb{R}^+$ is the maximum control input magnitude.

Since the relative degree of the DLGP-CBF \eqref{eq:dlgp_cbf} with respect to the angular velocity $\omega$ is $2$, (i.e., the CBF must be differentiated twice to expose $\omega$ explicitly), a virtual leading point is introduced to simplify the control design. 
For more details, we refer the reader to \cite{cortes2017coordinated}.
Specifically, the virtual leading point is defined at a distance $l \in \mathbb{R}^+$ ahead of the robot along its heading direction as $p_{\text{lead}} = \phi(x) + l [\cos(\theta), \sin(\theta)]^\mathsf{T}$, and it follows a single-integrator dynamics $\dot{p}_{\text{lead}} =u_{\text{lead}}$.
The safe control input \(u_{\text{lead}}(t)\) is computed by solving the QP problem~\eqref{eq:qp}. The corresponding control input \([v(t), \omega(t)]^\mathsf{T}\) for the differential-drive robot, governed by~\eqref{eq:dynamic}, is then recovered via a near-identity diffeomorphism:
\begin{equation}
  \begin{bmatrix}
  v(t) \\
  w(t)
  \end{bmatrix}
  =
  \begin{bmatrix}
  \cos(\theta(t)) & \sin(\theta(t)) \\
  -\frac{1}{l} \sin(\theta(t)) & \frac{1}{l} \cos(\theta(t))
  \end{bmatrix}
  u_{\text{lead}}(t).
  \end{equation}

The hyperparameter for SE kernel is set as $l=0.9$.
For the DLGP-CBF defined in \eqref{eq:dlgp_cbf}, the user-defined parameters are chosen as $c_{s}=1$ and $d_{\text{shift}}=0.1$.
Additionally, the extended class $\mathcal{K}_\infty $ function used in the CBF constraint \eqref{eq:qp} is selected as $\alpha(h(x,\mathcal{D})) = 0.2h(x,\mathcal{D})$.

\subsection{Simulations}

We evaluate the performance of the proposed DLGP-CBF method in a dynamic environment and compare it with two baseline approaches: GP-CBF~\cite{keyumarsi2023lidar} and MPC-DCBF~\cite{jian2023dynamic}.

\subsubsection{Performance of DLGP-CBF}
\begin{figure}[!t]
  \centering
  \includegraphics[width=2.9in]{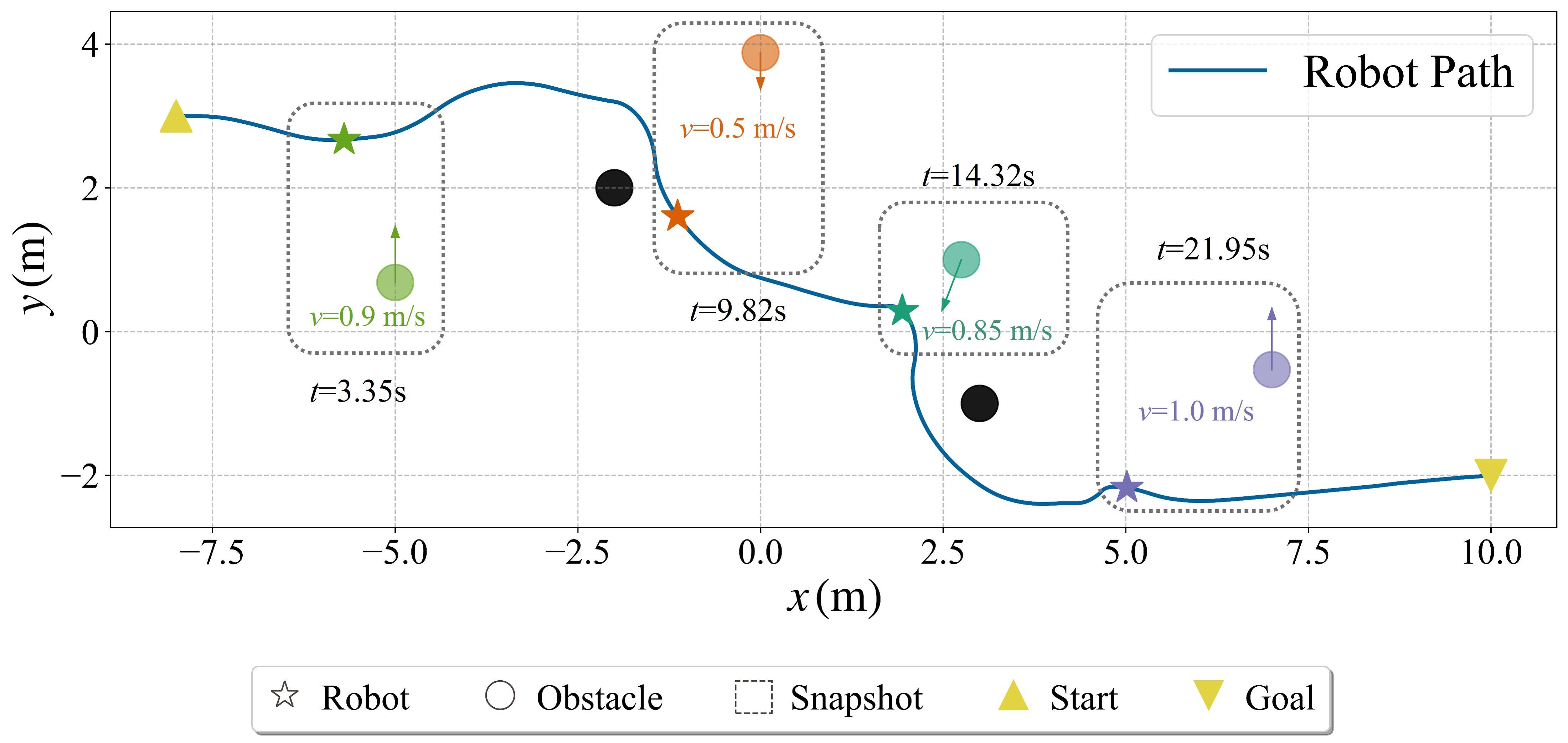}
  \caption{
    Trajectory generated by the DLGP-CBF method. 
    Dynamic and static obstacles are shown as colored and black circles, respectively. 
    Dotted rectangles indicate the positions of the robot and dynamic obstacles at selected timestamps.
    }
  \label{fig::trajactory}
  \end{figure}

  \begin{figure}[!t]
    \centering
    \includegraphics[width=2.9in]{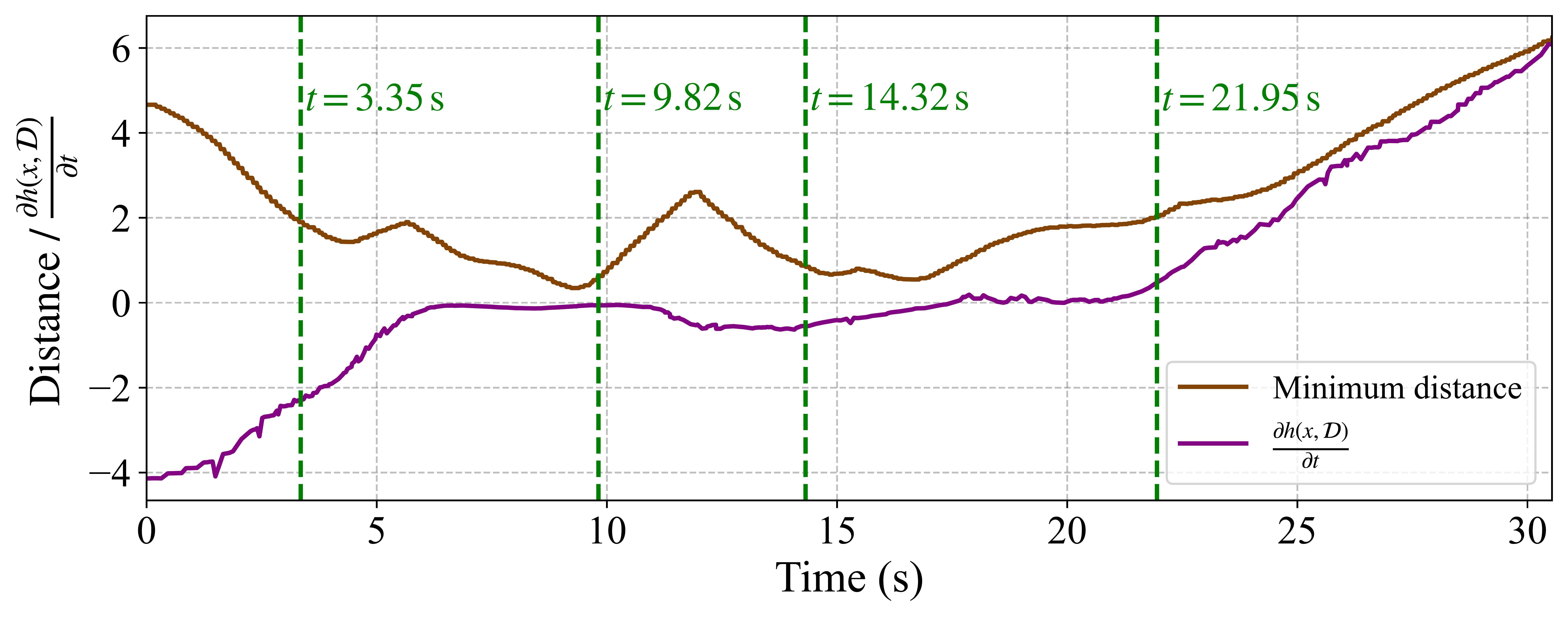}
    \caption{
      Evolution of the minimum distance to obstacles and the time derivative \(\frac{\partial h}{\partial t}\) over time. 
      Green vertical lines correspond to the snapshot timestamps in Fig.~\ref{fig::trajactory}.
        }
        \vspace{-0.5cm}
    \label{fig::grad}
    \end{figure}

Fig.~\ref{fig::trajactory} shows the trajectory generated by DLGP-CBF. The robot successfully reaches the goal while avoiding both static and dynamic obstacles. The snapshots at different timestamps illustrate the robot's timely and proactive responses to approaching dynamic obstacles.
To further analyze this behavior, Fig.~\ref{fig::grad} presents the evolution of the minimum distance to obstacles and the time derivative \(\frac{\partial h(x,\mathcal{D})}{\partial t}\). 
At critical moments corresponding to the snapshots in Fig.~\ref{fig::trajactory}, the time derivative becomes negative as obstacles move toward the robot, indicating a decreasing barrier value. 
This triggers early avoidance maneuvers, yielding smoother and safer trajectories.

It is worth noting that the update time of the DLGP-CBF depends on the training dataset size. 
However, GP regression inherently leverages correlations between data points, allowing it to perform effectively with sparse training datasets \cite{lyu}. 
Therefore, overly dense training data may introduce redundant information and increase computational complexity unnecessarily. In practice, the dataset can be downsampled to balance computational efficiency and control performance. 
In our simulations, the average training dataset size is 30, resulting in an average inference time of approximately 21 ms, satisfying real-time control requirements.

\subsubsection{Comparison with Baseline Methods}

We compare DLGP-CBF with GP-CBF and MPC-DCBF under the same experimental setup. The resulting trajectories are shown in Fig.~\ref{comparison}, and the quantitative results are summarized in Table \uppercase\expandafter{\romannumeral1}, using four metrics: minimum distance to obstacles during the whole navigating process, arrival time, and the variances of linear and angular velocities.
As observed, all methods achieve collision-free navigation. However, DLGP-CBF consistently maintains a larger minimum distance, achieves a shorter arrival time, and exhibits lower angular velocity variance, indicating enhanced safety, efficiency, and control smoothness.
While MPC-DCBF also accounts for obstacle dynamics, its reliance on per-obstacle geometric fitting increases complexity and may degrade control consistency.
In contrast, DLGP-CBF constructs a single unified barrier function directly from LiDAR data, enabling more stable and scalable control performance, as shown in Table \uppercase\expandafter{\romannumeral1}.

\begin{figure}[!t]
  \centering
  \includegraphics[width=2.9in]{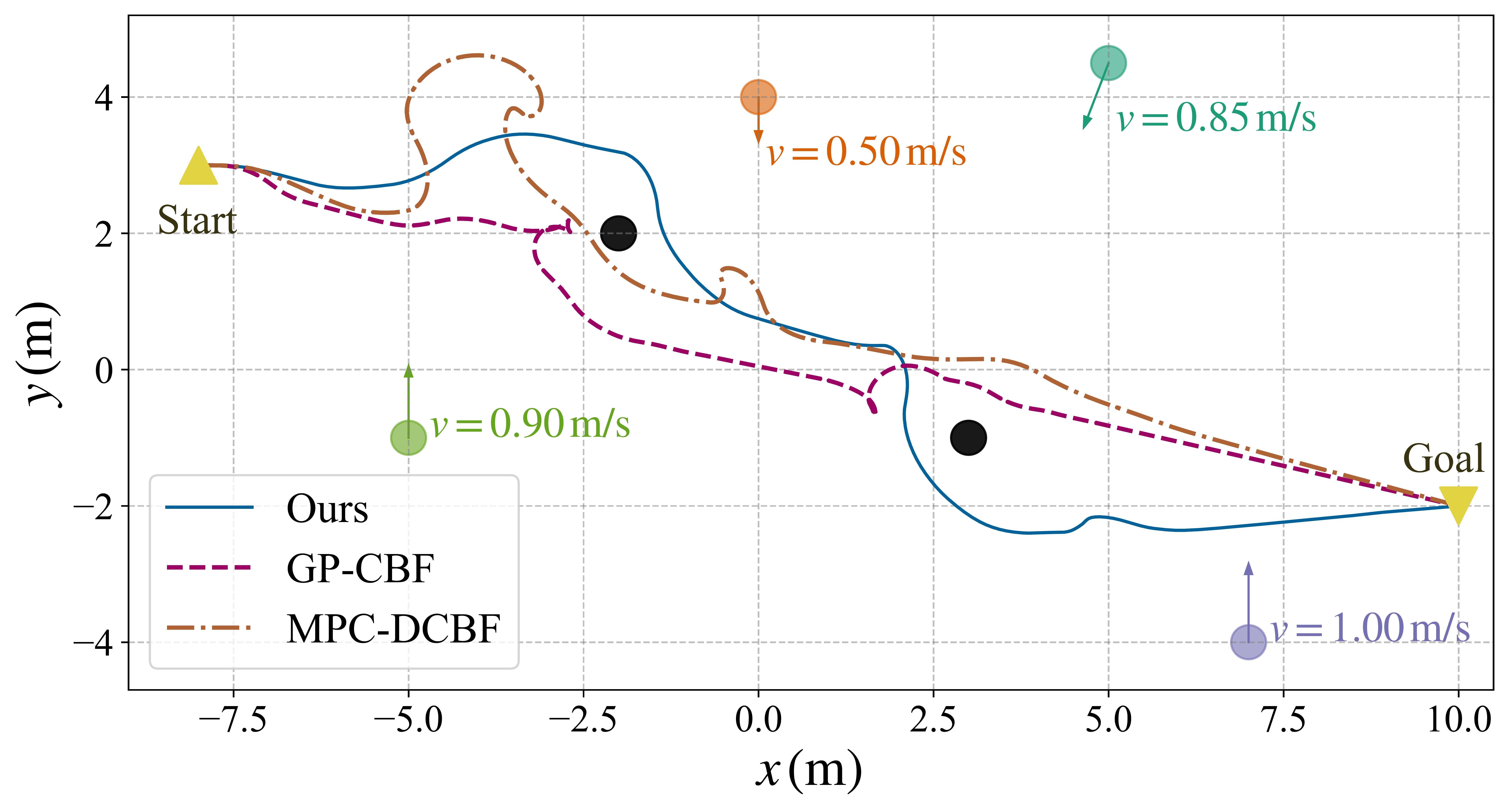}
  \caption{Comparison of trajectories generated by different methods.
  The colored circles represent dynamic obstacles with different velocities and the black circles represent static obstacles. 
  }
  \vspace{-0.5cm}
  \label{comparison}
  \end{figure}

\begin{table}[htb]   
    \centering \label{table:time}
    \caption{Performance comparison with baseline methods.}  
    \tabcolsep=3mm
      \begin{tabular}{>{\centering\arraybackslash}m{1.5cm} 
      >{\centering\arraybackslash}m{1cm}
      >{\centering\arraybackslash}m{1cm}
      >{\centering\arraybackslash}m{0.8cm}
      >{\centering\arraybackslash}m{0.8cm}}   
    \toprule
    \multirow{2}{*}{\textbf{Algorithms}} 
    & \multirow{2}{*}{\makecell{\textbf{Min.}\\\textbf{Dist. (m)}}} 
    & \multirow{2}{*}{\makecell{\textbf{Arrival}\\\textbf{Time (s)}}} 
    & \multicolumn{2}{c}{\textbf{Speed Variance}} \\ 
    \cmidrule(lr){4-5}
    & & & \textbf{Linear} & \textbf{Angular} \\  
            \midrule
    MPC-DCBF & 0.129 & 33.867 & \textbf{0.110} & 0.702 \\ 
    GP-CBF   & 0.161 & 33.766& 0.143 & 0.575 \\  
    \textbf{Ours}    & \textbf{0.346} & \textbf{30.533} & 0.112 & \textbf{0.339} \\
    \bottomrule
    \end{tabular}   
    \end{table}
    \vspace{-0.5cm}

\section{Conclusion}

In this paper, we have proposed the DLGP-CBF to address the safety-critical navigation problem for robots operating in unknown and dynamic environments. The DLGP-CBF explicitly incorporates dynamic obstacle velocity information, enabling proactive and timely responses to moving obstacles. Additionally, through a logarithmic Gaussian Process formulation, the proposed method provides more accurate and informative barrier function values and gradients, even in regions distant from observed obstacles. 
Simulation results have validated the effectiveness of our method.

\bibliographystyle{IEEEtran} 

\bibliography{refs}

@inproceedings{ames2019control,
  title        = {Control barrier functions: Theory and applications},
  author       = {Ames, Aaron D and Coogan, Samuel and Egerstedt, Magnus and Notomista, Gennaro and Sreenath, Koushil and Tabuada, Paulo},
  booktitle    = {2019 18th European Control Conference (ECC)},
  pages        = {3420--3431},
  year         = {2019},
  organization = {IEEE}
}

@inproceedings{jian2023dynamic,
  title        = {Dynamic control barrier function-based model predictive control to safety-critical obstacle-avoidance of mobile robot},
  author       = {Jian, Zhuozhu and Yan, Zihong and Lei, Xuanang and Lu, Zihong and Lan, Bin and Wang, Xueqian and Liang, Bin},
  booktitle    = {2023 IEEE International Conference on Robotics and Automation (ICRA)},
  pages        = {3679--3685},
  year         = {2023},
  organization = {IEEE}
}

@article{dawson2022learning,
  title={Learning safe, generalizable perception-based hybrid control with certificates},
  author={Dawson, Charles and Lowenkamp, Bethany and Goff, Dylan and Fan, Chuchu},
  journal={IEEE Robotics and Automation Letters},
  volume={7},
  number={2},
  pages={1904--1911},
  year={2022},
  publisher={IEEE}
}

@inproceedings{autenrieb2023safe,
  title={Safe and stable adaptive control for a class of dynamic systems},
  author={Autenrieb, Johannes and Annaswamy, Anuradha},
  booktitle={2023 62nd IEEE Conference on Decision and Control (CDC)},
  pages={5059--5066},
  year={2023},
  organization={IEEE}
}

@INPROCEEDINGS{10160805,
  author={Abdi, Hossein and Raja, Golnaz and Ghabcheloo, Reza},
  booktitle={2023 IEEE International Conference on Robotics and Automation (ICRA)}, 
  title={Safe Control using Vision-based Control Barrier Function (V-CBF)}, 
  year={2023},
  volume={},
  number={},
  pages={782-788},
  keywords={Shape;Navigation;Robot vision systems;Streaming media;Cameras;Safety;Mobile robots},
  doi={10.1109/ICRA48891.2023.10160805}}

@inproceedings{lie2023formation,
  title={Formation Control of Underactuated AUVs Using the Hand Position Concept},
  author={Lie, Erling S and Matou{\v{s}}, Josef and Pettersen, Kristin Y},
  booktitle={2023 62nd IEEE Conference on Decision and Control (CDC)},
  pages={1412--1419},
  year={2023},
  organization={IEEE}
}

@article{wang2017safety,
  title     = {Safety barrier certificates for collisions-free multirobot systems},
  author    = {Wang, Li and Ames, Aaron D and Egerstedt, Magnus},
  journal   = {IEEE Transactions on Robotics},
  volume    = {33},
  number    = {3},
  pages     = {661--674},
  year      = {2017},
  publisher = {IEEE}
}

@article{zhang2025robust,
  title={Robust Dual-Filter Safety Control for Mobile Robots in Dynamic Multiobstacle Environments},
  author={Zhang, Yu and Kong, Linghuan and Yu, Xinbo and He, Wei and Knoll, Alois},
  journal={IEEE/ASME Transactions on Mechatronics},
  year={2025},
  publisher={IEEE}
}

@ARTICLE{9888130,
  author={Khan, Mouhyemen A. and Ibuki, Tatsuya and Chatterjee, Abhijit},
  journal={IEEE Access}, 
  title={Gaussian Control Barrier Functions: Non-Parametric Paradigm to Safety}, 
  year={2022},
  volume={10},
  number={},
  pages={99823-99836},
  keywords={Safety;Gaussian processes;Uncertainty;Noise measurement;Hardware;Dynamical systems;System dynamics;Mission critical systems;Control barrier functions;Gaussian processes;non-parametric;safety-critical control},
  doi={10.1109/ACCESS.2022.3206372}}

@inproceedings{srinivasan2020synthesis,
  title={Synthesis of control barrier functions using a supervised machine learning approach},
  author={Srinivasan, Mohit and Dabholkar, Amogh and Coogan, Samuel and Vela, Patricio A},
  booktitle={2020 IEEE/RSJ International Conference on Intelligent Robots and Systems (IROS)},
  pages={7139--7145},
  year={2020},
  organization={IEEE}
}

@INPROCEEDINGS{9981177,
  author={Lafmejani, Amir Salimi and Berman, Spring and Fainekos, Georgios},
  booktitle={2022 IEEE/RSJ International Conference on Intelligent Robots and Systems (IROS)}, 
  title={NMPC-LBF: Nonlinear MPC with Learned Barrier Function for Decentralized Safe Navigation of Multiple Robots in Unknown Environments}, 
  year={2022},
  volume={},
  number={},
  pages={10297-10303},
  keywords={Knowledge engineering;Laser radar;Navigation;Neural networks;Real-time systems;Mobile robots;Collision avoidance},
  doi={10.1109/IROS47612.2022.9981177}}

@inproceedings{zhang2024online,
  title={Online efficient safety-critical control for mobile robots in unknown dynamic multi-obstacle environments},
  author={Zhang, Yu and Tian, Guangyao and Wen, Long and Yao, Xiangtong and Zhang, Liding and Bing, Zhenshan and He, Wei and Knoll, Alois},
  booktitle={2024 IEEE/RSJ International Conference on Intelligent Robots and Systems (IROS)},
  pages={12370--12377},
  year={2024},
  organization={IEEE}
}

@article{keyumarsi2023lidar,
  title={LiDAR-based online control barrier function synthesis for safe navigation in unknown environments},
  author={Keyumarsi, Shaghayegh and Atman, Made Widhi Surya and Gusrialdi, Azwirman},
  journal={IEEE Robotics and Automation Letters},
  volume={9},
  number={2},
  pages={1043--1050},
  year={2023},
  publisher={IEEE}
}

@inproceedings{ester1996density,
author = {Ester, Martin and Kriegel, Hans-Peter and Sander, J\"{o}rg and Xu, Xiaowei},
title = {A density-based algorithm for discovering clusters in large spatial databases with noise},
year = {1996},
publisher = {AAAI Press},
abstract = {Clustering algorithms are attractive for the task of class identification in spatial databases. However, the application to large spatial databases rises the following requirements for clustering algorithms: minimal requirements of domain knowledge to determine the input parameters, discovery of clusters with arbitrary shape and good efficiency on large databases. The well-known clustering algorithms offer no solution to the combination of these requirements. In this paper, we present the new clustering algorithm DBSCAN relying on a density-based notion of clusters which is designed to discover clusters of arbitrary shape. DBSCAN requires only one input parameter and supports the user in determining an appropriate value for it. We performed an experimental evaluation of the effectiveness and efficiency of DBSCAN using synthetic data and real data of the SEQUOIA 2000 benchmark. The results of our experiments demonstrate that (1) DBSCAN is significantly more effective in discovering clusters of arbitrary shape than the well-known algorithm CLAR-ANS, and that (2) DBSCAN outperforms CLARANS by a factor of more than 100 in terms of efficiency.},
booktitle = {Proceedings of the Second International Conference on Knowledge Discovery and Data Mining},
pages = {226–231},
numpages = {6},
keywords = {arbitrary shape of clusters, clustering algorithms, efficiency on large spatial databases, handling nlj4-275oise},
location = {Portland, Oregon},
}

@inproceedings{welzl2005smallest,
  title={Smallest enclosing disks (balls and ellipsoids)},
  author={Welzl, Emo},
  booktitle={New Results and New Trends in Computer Science: Graz, Austria, June 20--21, 1991 Proceedings},
  pages={359--370},
  year={2005},
  organization={Springer}
}

@article{kuhn1955hungarian,
  title={The Hungarian method for the assignment problem},
  author={Kuhn, Harold W},
  journal={Naval research logistics quarterly},
  volume={2},
  number={1-2},
  pages={83--97},
  year={1955},
  publisher={Wiley Online Library}
}

@article{welch1995introduction,
author = {Welch, Greg and Bishop, Gary},
year = {2006},
month = {01},
pages = {},
title = {An Introduction to the Kalman Filter},
volume = {8},
journal = {Proc. Siggraph Course}
}

@article{seeger2004gaussian,
  title={Gaussian processes for machine learning},
  author={Seeger, Matthias},
  journal={International Journal of Neural Systems},
  volume={14},
  number={02},
  pages={69--106},
  year={2004},
  publisher={World Scientific}
}

@article{cortes2017coordinated,
  title={Coordinated control of multi-robot systems: A survey},
  author={Cort{\'e}s, Jorge and Egerstedt, Magnus},
  journal={SICE Journal of Control, Measurement, and System Integration},
  volume={10},
  number={6},
  pages={495--503},
  year={2017},
  publisher={Taylor \& Francis}
}

@InProceedings{lyu,
author="Lyu, Chenyi
and Liu, Xingchi
and Mihaylova, Lyudmila",
editor="Panoutsos, George
and Mahfouf, Mahdi
and Mihaylova, Lyudmila S.",
title="Review of Recent Advances in Gaussian Process Regression Methods",
booktitle="Advances in Computational Intelligence Systems",
year="2024",
publisher="Springer Nature Switzerland",
address="Cham",
pages="226--237",
abstract="Gaussian process (GP) methods have been widely studied recently, especially for large-scale systems with big data and even more extreme cases when data is sparse. Key advantages of these methods consist in: 1) the ability to provide inherent ways to assess the impact of uncertainties (especially in the data, and environment) on the solutions, 2) have efficient factorisation-based implementations and 3) can be implemented easily in distributed manners and hence provide scalable solutions. This paper reviews the recently developed key factorised GP methods such as the hierarchical off-diagonal low-rank approximation methods and GP with Kronecker structures. An example illustrates the performance of these methods with respect to accuracy and computational complexity.",
isbn="978-3-031-55568-8"
}

\end{document}